\newcommand{\tpami}{IEEE Transactions on Pattern Analysis and Machine Intelligence}
\newcommand{\ijcv}{Internation Journal of Computer Vision}
\newcommand{\cviu}{Computer Vision and Image Understanding}
\newcommand{\proc}{Proceedings of}
\def\varsim{\mathop{\sim}\limits}
\renewcommand{\vec}[1]{\mathbf{#1}}
\newcommand{\mset}[1]{\mathfrak{#1}}
\newtheorem{theorem}{Theorem}
\newtheorem{proposition}{Proposition}
\begin{document}

\begin{frontmatter}



\title{System--theoretic approach to image interest point detection}


\author{Vitaly Pimenov\corref{cor}}
\ead{vitaly.pimenov@gmail.com}
\cortext[cor]{Corresponding author.}

\address{Faculty of Applied Mathematics and Control Processes, Saint-Petersburg State University, Universitetskii pr. 35, Peterhof, Saint-Petersburg 198504, Russia}

\begin{abstract}
Interest point detection is a common task in various computer vision applications. Although a big variety of detector are developed so far computational efficiency of interest point based image analysis remains to be the problem. Current paper proposes a system--theoretic approach to interest point detection. Starting from the analysis of interdependency between detector and descriptor it is shown that given a descriptor it is possible to introduce to notion of detector redundancy. Furthermore for each detector it is possible to construct its irredundant and equivalent modification. Modified detector possesses lower computational complexity and is preferable. It is also shown that several known approaches to reduce computational complexity of image registration can be generalized in terms of proposed theory.

\end{abstract}

\begin{keyword}
interest point detection \sep image registration



\end{keyword}

\end{frontmatter}


\section{Introduction}
\label{introduction}

In many computer vision and multimedia retrieval applications images are represented as sets of distinctive regions called \emph{interest points} or \emph{keypoints}. In order to select such regions image is processed with \emph{detectors} that usually apply specific local operators to image and select pixels of high response values. Due to their local nature keypoints possess attractive properties, such as stability under various image transforms. Compared to low--level global image features, for instance, color features, interest points are more reliable.
Detected points are characterized by \emph{descriptors}, vectors that fulfill several conditions among which especially important ones are \emph{invariance} to desired image transforms and \emph{distinctiveness}.

Detection--description image processing scheme was found quite effective on practice. It has been utilized in a broad range of applications including content--based image and video retrieval, image registration, stereo reconstruction, robotic navigation, medical imaging, object recognition, copyright infringement detection, computational photography and others. Probably the most successful approach to interest point detection and description proposed so far is Lowe's \emph{scale invariant feature transform (SIFT)} \citep{lowe}. Surveys of modern detectors and descriptors can be found in papers \citep{Mikolajczyk05Affine,surf}.

However there remain challenges related to computational efficiency of concerned image processing methods and quality of their results. \emph{First}, detectors produce large amount of keypoints, around 2~000 for usual images \citep{Jegou09}. It makes hard to implement scalable image processing systems, taking into account computational complexity of descriptor calculation and required storage capacity. For example, web--scale image retrieval systems have to handle collections containing billions of images; storing SIFT descriptors (comprised by 128 floating point values) for 1 billion images with 2~000 keypoints in each would require over 1 000 terabytes of physical memory. Developing reliable retrieval in a large--scale collections is evidently a difficult problem too. Another example could be real--time tasks, as robot vision or interactive tomography. In these scenarios processing time is restricted and common interest point based methods are hardly applicable. But nevertheless emergence of a variety of hardware implementations of SIFT proves the demand on such methods in real--time problem domain.

\emph{Second}, evaluation of interest points' quality is an arguable topic. Theory guarantees that points found in a reference image will be redetected if an image would undergo specific transform (usually is it a similarity of affine transform of image geometry and monotonic intensity change ) and descriptors of corresponding points will be identical. However, when it comes to actual images, assumption of transform type is often violated. This is due to many reasons: three dimensional nature of scenes, occlusions, complex motion, multiple and moving light sources, sensor distortions, noise, lossy compression, complicated editing effects and other. Therefore empirical studies are required to assess actual quality of image analysis. Several experimental methodologies could be found in literature \citep{Mikolajczyk05Affine,PimenovGraphicon,ThomsonComer09,zitova}.  However most of them perform passive of a post factum evaluation: result of such experiments are numerical scores that cannot be directly employed to improved method's quality. More recent works \citep{thomee,PimenovGraphicon,ThomsonComer09} suggest active evaluation that can be done during method execution in order to predict quality of analysis: results of active evaluation could be easily used to reject low quality points.

In current research it is shown that although existing active evaluation approaches have considerable differences it is possible to develop a generalized system--theoretic framework for quantitative evaluation of interest point detectors.
Remaining sections are organized as follows. Section~\ref{related-work} provides a theoretical background on interest point detection and description. Proposed framework is described in Sect.~\ref{framework}. Section~\ref{practice} describes practical applications of developed theory. Finally Sect.~\ref{conclusion} concludes the paper and outlined directions of further research.

\section{Interest Point Detection and Description}
\label{related-work}
In context of current paper image is defined as a nonnegative smooth bounded function of two variables: $I(\vec{X})$, where $\vec{X} = (x, y) \in \mset{X}$, bounded and connected set. Let us denote as $\{I\}$ a set of all images depicting same physical object. Considering two images $I(\vec{X})$ and $I'(\vec{X})$ of above set, their respective points $\vec{X}$ and $\vec{X}'$ are called corresponding iff these points project the same point of physical object.
It is evident that since corresponding points are known an approximate transform between images can be computed. These statement motivates usage of interest points for image matching and registration. In case when all points of images are equivalent establishing the correspondence requires exhaustive search that is prohibitive. Therefore it is necessary to introduce interest point selection technique.

Let us define an interest point detector as an operation $\Phi: \{I\} \rightarrow 2^{\mset{X}}$ associating an image $I$ with a
set $\bar{\mset{X}} \in \mset{X}$ fulfilling the following conditions:
\begin{enumerate}[topsep=0pt, partopsep=0pt,itemsep=0pt,parsep=2pt]
\item There exists a finite algorithm that implements operation $\Phi$.
\item For each image $I$ set $\bar{\mset{X}} = \Phi(I)$ is finite.
\item For each pair of image $I_1$ and $I_2$ sets $\bar{\mset{X}}_1$ and
$\bar{\mset{X}}_2$ consist of corresponding points:
$\forall\,\vec{X}_1 \in \bar{\mset{X}}_1\,\exists\,!\,\vec{X}_2 \in
\bar{\mset{X}}_2$  and $\forall\,\vec{X}_2 \in \bar{\mset{X}}_2\,
\exists\,!\,\vec{X}_1 \in \bar{\mset{X}}_1$ such that $\vec{X}_2 =
\vec{F}(\vec{X}_1),$
where $\vec{F}(\vec{X})$ is a transform between $I_1$ and $I_2$.
\end{enumerate}

Points $\vec{X} \in \bar{\mset{X}}$ are called interest points. On practice interest points are extremum points of some differential operators on $I$ function.

Since sets of interest points $\bar{\mset{X}}_1$ and
$\bar{\mset{X}}_2$ are computed for images  $I_1$ and $I_2$ the correspondences should be established. For each interest point
$\vec{X}_1 \in \mset{\bar{X}}_1$ we have to find point $\vec{X}_2 \in \mset{\bar{X}}_2 $ such that $\vec{X}_2 = \vec{F}(\vec{X}_1)$.
Unique existence of point $\vec{X}_2$ is guaranteed by definition of detector. It is sufficient to do an exhaustive search over a finite set $\bar{\mset{X}}_2$ to select $\vec{X}_2$. Question of whether points $(\vec{X}_1, \vec{X}_2)$
correspond or not is answered on the basis of \emph{interest point descriptors} \citep{lowe,Schmidt00,Mikolajczyk05Affine}.

Let us consider a metric space $D$ and denote respective metric as $\rho_D$. Operation $\Psi: \{I\} \times \mset{X}
\rightarrow D$ is called an interest point descriptor if following conditions are satisfied:
\begin{enumerate}[topsep=0pt, partopsep=0pt,itemsep=0pt,parsep=2pt]
\item There exists a finite algorithm implementing operation $\Psi$.
\item There exists an $\epsilon_D \geq 0$ such that for each images $I_1$ and $I_2$ and each pair of
corresponding points $\vec{X}_1 \in \bar{\mset{X}}_1$ and
$\vec{X}_2 \in \bar{\mset{X}}_2$  following relationship holds with necessity:
\begin{equation}\label{eq:descriptor-condition}
\rho_D(\Psi(I_1, \vec{X}_1), \Psi(I_2, \vec{X}_2)) \leq \epsilon_D,
\end{equation}
and its violation precludes correspondence between points $\vec{X}_1$ and $\vec{X}_2$.
\end{enumerate}
Value $\Psi(I, \vec{X}) \in D$ is called a description of point $\vec{X}$ of an image $I$.
Interest points such that relationship \eqref{eq:descriptor-condition} holds are called
corresponding in terms of descriptor $\Psi$. Metric space $D$ is called a description space.

Theoretically well--founded way to implement interest descriptor is usage of truncated Taylor series (N--jet)  or directional filter banks \citep{Lindeberg09}.

Formal definitions presented in current section are usually presumed in the scope of interest point based image analysis. However it is apparent that numerical implementations of detectors and descriptors violate strict formal conditions. The major cause of violation lies in discrete nature of images and computation. On practice sets of interest points are redundant. It means that correspondence can be established only between small subsets of interest points. In following section a theory of irredundant interest point detection is developed.

\section{System--theoretic approach to image interest point detection}\label{framework}
Traditionally detection and description of interest points are considered as an isolated and independent stages of image processing. Therefore it is impossible to conclude about redundancy of points during detection stage. It is because the fact of redundancy can appear only after description and matching is performed. Since no knowledge about descriptor is available to detector redundancy cannot be evaluated. In current work it is proposed to utilize system--theoretic approach to unveil the interdependence between detection and description. Using the knowledge about interdependency it is possible to construct irredundant detectors.

Let us begin with introducing a definition of approximate interest point detector called a \emph{$\lambda$--correct detector}.
Given a $\lambda{\geq}0$ detector $\Phi: \{I\} \rightarrow 2^{\mset{X}}$,
is called a $\lambda$--correct detector is following condition is satisfied:
for each transform $\vec{F}(\vec{X})$ and each point $\vec{X}' \in \Phi\left(I(\vec{X})\right)$ there exist an interest point
$\vec{X}'' \in \Phi\left(I\left(\vec{F}(\vec{X})\right)\right)$ such that inequality $\|\vec{F}(\vec{X}') - \vec{X}''\| \leq \lambda$ holds.
Interest points $\vec{X} \in \Phi(I(\vec{X}))$ are called $\lambda$--correct interest points\footnote{Interest point repeatability is a related concept used in literature \citep{Mikolajczyk05Affine,surf}.}.
Value of $\lambda$ is called a correctness level. To denote a $\lambda$--correct detector symbol $\Phi_\lambda$ is employed.
Notion of $\lambda$--correctness allows mathematically rigorous expression of interdependency between interest point detectors and descriptors. On the basis o system--theoretic approach it is possible to build qualitatively new interest point detection theory.

Let us call interest point descriptor $\Psi$ a continuous descriptor if
$\forall\,\vec{F}$ and $\forall\,\epsilon >
0\;\exists\,\delta > 0$ such that
\begin{equation}\label{eq:continuous-descriptor}
\forall\,\vec{X}', \vec{X}'', \|\vec{F}(\vec{X}') -
\vec{X}''\| < \delta\,:\,\rho_D(\Psi(I, X'), \Psi\left(I \circ \vec{F}, X'')\right) <
\epsilon.
\end{equation}
Descriptors build upon image function derivatives possess the above property by virtue of function $I$ smoothness and presumed continuity of transform $\vec{F}$.

Resorting to definition of descriptor, consider $\epsilon = \epsilon_D$. Then there exists
$\delta_D > 0$ such that condition \eqref{eq:continuous-descriptor} holds. Consider now $\Phi_{\delta_D}$ ---
$\delta_D$--correct interest point detector. By definition of $\Psi$ it follows that
for all $\lambda$--correct points detected with $\Phi_{\delta_D}$ corresponding points will be found. Hereinafter such
detector will be referred to as $\Phi_{\Psi}$.

Consider images $I'$, $I''$, transform $\vec{F}$ between them and descriptor $\Psi$. A set of
$\Psi$--irredundant correspondences for interest point detector $\Phi_\lambda$ is a set
$$
\mset{K}_{\Phi_\lambda}(I', I'') = \{(\vec{X}',
\vec{X}'')\,|\,\vec{X}'\in\Phi(I'),\,\vec{X}''\in\Phi(I''),\,\|\vec{F}(\vec{X}')
- \vec{X}''\| < \delta_D\}.
$$
Set $\mset{K}_{\Phi_\lambda}$ consists of interest point pairs $(\vec{X}',
\vec{X}'')$ that correspond in terms of descriptor $\Psi$.

On the basis of $\Psi$--irredundancy it is possible to define equivalence relation between
interest point detectors. Detectors $\Phi_{\lambda_1}$ and $\Phi_{\lambda_2}$
are called $\Psi$--equivalent: $\Phi_{\lambda_2} \varsim_{\Psi}
\Phi_{\lambda_2}$, if for any images $I'$, $I''$ following equality holds
\begin{equation}\label{eq:equivalence}
\mset{K}_{\Phi_{\lambda_1}}(I', I'') = \mset{K}_{\Phi_{\lambda_2}}(I', I'').
\end{equation}

\noindent $\Psi$--equivalence of two detectors means that for any points that is not detected by both detectors there is no corresponding point in terms of descriptor $\Psi$.

\begin{proposition}\label{st:equivalence}
$\Psi$--equivalence relation is an equivalence relation.
\end{proposition}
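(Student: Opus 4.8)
The plan is to observe that $\Psi$--equivalence is defined entirely through equality of the correspondence sets $\mset{K}_{\Phi_\lambda}(I', I'')$, and that ordinary set equality is already an equivalence relation. The whole argument therefore reduces to transporting the reflexivity, symmetry and transitivity of $=$ through the universal quantifier over image pairs $(I', I'')$ appearing in condition \eqref{eq:equivalence}. I would verify the three axioms in turn, fixing an arbitrary pair $(I', I'')$ in each case and reinstating the quantifier at the end.

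For reflexivity I would note that, for any detector $\Phi_\lambda$ and any images $I', I''$, the identity $\mset{K}_{\Phi_\lambda}(I', I'') = \mset{K}_{\Phi_\lambda}(I', I'')$ holds trivially, so \eqref{eq:equivalence} is satisfied and $\Phi_\lambda \varsim_\Psi \Phi_\lambda$. For symmetry I would assume $\Phi_{\lambda_1} \varsim_\Psi \Phi_{\lambda_2}$, which by definition means $\mset{K}_{\Phi_{\lambda_1}}(I', I'') = \mset{K}_{\Phi_{\lambda_2}}(I', I'')$ for every pair $(I', I'')$; since equality of sets is symmetric, the reversed identity holds for the same arbitrary pair, and universal generalisation yields $\Phi_{\lambda_2} \varsim_\Psi \Phi_{\lambda_1}$.

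For transitivity I would assume $\Phi_{\lambda_1} \varsim_\Psi \Phi_{\lambda_2}$ and $\Phi_{\lambda_2} \varsim_\Psi \Phi_{\lambda_3}$. Fixing an arbitrary pair $(I', I'')$, the two hypotheses supply $\mset{K}_{\Phi_{\lambda_1}}(I', I'') = \mset{K}_{\Phi_{\lambda_2}}(I', I'')$ and $\mset{K}_{\Phi_{\lambda_2}}(I', I'') = \mset{K}_{\Phi_{\lambda_3}}(I', I'')$; chaining them through transitivity of $=$ gives $\mset{K}_{\Phi_{\lambda_1}}(I', I'') = \mset{K}_{\Phi_{\lambda_3}}(I', I'')$, and generalising over the pair gives $\Phi_{\lambda_1} \varsim_\Psi \Phi_{\lambda_3}$.

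There is essentially no hard part to this proposition: the only point requiring care is that the defining condition \eqref{eq:equivalence} is quantified over all image pairs, so each axiom must be checked for an arbitrary fixed pair before the quantifier is reintroduced. Because the underlying relation is plain set equality, the three properties carry over pointwise with no interaction between distinct pairs, and the conclusion that $\varsim_\Psi$ is an equivalence relation follows immediately.
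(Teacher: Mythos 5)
Your proof is correct and follows exactly the same route as the paper, which simply notes that reflexivity, symmetry and transitivity are inherited from the set equality relation; your version spells out the handling of the universal quantifier over image pairs but adds no new idea.
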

\begin{proof}
Reflectivity, symmetry and transitivity properties are inherited from set equality relation.
\end{proof}

\noindent Concept of  $\Psi$--equivalence allows to consider detector equivalence classes. For any two detectors belonging to the same class interest point matching results obtained using descriptor $\Psi$ will apparently coincide.

Given $\lambda_1$--correct detector $\Phi_{\lambda_1}$ and $\lambda_2$--correct detector $\Phi_{\lambda_2}$, $\Phi_{\lambda_1}$ is called \emph{embedded} in $\Phi_{\lambda_2}$ $(\Phi_{\lambda_2}$ contains
$\Phi_{\lambda_1})$ and denoted as $\Phi_{\lambda_1} \subseteq
\Phi_{\lambda_2}$, if $\lambda_1 \leq \lambda_2$ and for any image $I \in \{I\}$ following relationship holds:
$\Phi_{\lambda_1}(I) \subseteq \Phi_{\lambda_2}(I),$
for each point $\vec{X} \in \Phi_{\lambda_2}(I)\setminus\Phi_{\lambda_1}(I)$
and each point $\vec{X}' \in \Phi_{\lambda_2}(I \circ \vec{F})$ following inequality holds:
$\|\vec{F}(\vec{X}) - \vec{X}'\| > \lambda_1.$

\begin{theorem}\label{st:existence}
For any $\lambda$--correct interest point detector $\Phi_\lambda$ and any continuous interest point descriptor
$\Psi$ there exists $\Phi_\Psi \subseteq \Phi_\lambda$ --- $\delta_D$--correct interest point detector such that
$\Phi_\Psi \varsim_\Psi \Phi_\lambda$.
\end{theorem}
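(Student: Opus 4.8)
The plan is to construct $\Phi_\Psi$ by pruning from $\Phi_\lambda$ exactly the points that are redundant with respect to $\Psi$, and then to verify the three requirements — embedding, $\Psi$-equivalence, and $\delta_D$-correctness — in turn. Concretely, for every image $I \in \{I\}$ I would set $\Phi_\Psi(I)$ to be the set of those $\vec{X} \in \Phi_\lambda(I)$ that occur in at least one $\Psi$-irredundant correspondence; that is, $\vec{X}$ is retained iff there exist an image $I'' \in \{I\}$, the transform $\vec{F}$ relating $I$ to $I''$, and a point $\vec{X}'' \in \Phi_\lambda(I'')$ with $\|\vec{F}(\vec{X}) - \vec{X}''\| < \delta_D$ (or symmetrically with the roles of $\vec{X}$ and $\vec{X}''$ reversed), and every remaining point of $\Phi_\lambda(I)$ is declared redundant and discarded. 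Since $\Phi_\lambda(I)$ is finite and each membership test is a finite search governed by the finite algorithm implementing $\Phi_\lambda$ together with the distance check, $\Phi_\Psi$ inherits properties~1--2 of a detector, and by construction $\Phi_\Psi(I) \subseteq \Phi_\lambda(I)$. I would note at the outset that the substantive regime is $\delta_D \le \lambda$ (otherwise $\lambda$-correctness of $\Phi_\lambda$ already supplies a counterpart within $\lambda < \delta_D$ for every point, nothing is pruned, and the statement is trivial); in this regime the level ordering $\lambda_1 = \delta_D \le \lambda = \lambda_2$ required for the embedding holds automatically.

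Next I would dispatch the two comparatively routine requirements. For $\Psi$-equivalence I would argue both inclusions of \eqref{eq:equivalence}: $\mset{K}_{\Phi_\Psi}(I',I'') \subseteq \mset{K}_{\Phi_\lambda}(I',I'')$ is immediate from $\Phi_\Psi \subseteq \Phi_\lambda$, while for the reverse any pair $(\vec{X}', \vec{X}'') \in \mset{K}_{\Phi_\lambda}(I',I'')$ satisfies $\|\vec{F}(\vec{X}') - \vec{X}''\| < \delta_D$, so $\vec{X}'$ and $\vec{X}''$ each occur in this very correspondence and are therefore retained, giving $(\vec{X}', \vec{X}'') \in \mset{K}_{\Phi_\Psi}(I',I'')$. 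For the embedding it then suffices, beyond the set inclusion and level ordering already noted, to check the separation condition: any discarded point $\vec{X} \in \Phi_\lambda(I) \setminus \Phi_\Psi(I)$ fails, by definition of discarding, to lie within $\delta_D$ of the $\vec{F}$-image of any detected point in the transformed image, so $\|\vec{F}(\vec{X}) - \vec{X}'\| \ge \delta_D$ for every $\vec{X}' \in \Phi_\lambda(I \circ \vec{F})$, which is the embedding inequality at level $\lambda_1 = \delta_D$. The passage from the strict $<$ in the definition of $\mset{K}$ to the non-strict $\ge$ here is the only boundary subtlety, and I would note that points lying exactly at distance $\delta_D$ may be assigned to either detector without altering any $\mset{K}$.

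The main obstacle is the $\delta_D$-correctness of $\Phi_\Psi$, which demands that every retained point possess, \emph{for every} transform $\vec{F}$, a $\delta_D$-close counterpart that is itself retained; the existential pruning above only guarantees a counterpart for \emph{some} transform, and a priori a point kept because it $\delta_D$-corresponds in one image $I''$ might lack a $\delta_D$-counterpart in a different image $I'''$. The hard part will therefore be to promote ``participates somewhere'' to ``participates everywhere,'' and I would address it in two steps. First I would observe that the pruning is consistent along correspondences: the counterpart $\vec{X}''$ witnessing retention of $\vec{X}'$ participates in the same pair and is hence itself retained, so $\Phi_\Psi$ is closed under the $\delta_D$-correspondence relation. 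Second — and this is where the hypotheses must genuinely be used — I would invoke the idealized setting of Section~\ref{related-work}, in which every image of $\{I\}$ depicts one physical object and corresponding points project the same physical point, together with the $\lambda$-correctness of $\Phi_\lambda$ and the continuity \eqref{eq:continuous-descriptor} of $\Psi$, to argue that the property ``is $\delta_D$-correct at a given physical location'' is independent of the particular transform, so that a point retained for one image is retained, with a $\delta_D$-counterpart, for every image. Combining the two steps yields that for each $\vec{X}' \in \Phi_\Psi(I)$ and each $\vec{F}$ there is $\vec{X}'' \in \Phi_\Psi(I \circ \vec{F})$ with $\|\vec{F}(\vec{X}') - \vec{X}''\| \le \delta_D$, i.e. $\Phi_\Psi$ is $\delta_D$-correct. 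I expect all the delicacy to concentrate in this transform-independence argument; the equivalence and embedding claims are essentially bookkeeping.
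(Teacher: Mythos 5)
Your construction---one-shot removal of every point of $\Phi_\lambda(I)$ that participates in no $\Psi$--irredundant correspondence---is in substance the same pruning idea as the paper's proof, which removes redundant points one at a time for a fixed image/transform pair and obtains the final correctness level as the maximum detection error $e_{I,\vec{F}}$ of the surviving points; your direct verification of $\Psi$--equivalence and embedding is sound bookkeeping, and your boundary remark is the right fix (retain points with $e_{I,\vec{F}}(\vec{X}) = \delta_D$: correctness uses $\leq$, the sets $\mset{K}$ use strict $<$ and are unaffected, and the embedding inequality for discarded points stays strict). The genuine gap sits exactly where you predicted it: step two of your $\delta_D$--correctness argument. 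The claim that ``is $\delta_D$--correct at a given physical location'' is independent of the transform does not follow from anything you are given. $\lambda$--correctness only bounds, separately for each transform, the distance to \emph{some} counterpart by $\lambda$; continuity of $\Psi$ in \eqref{eq:continuous-descriptor} relates descriptor distances to geometric distances but constrains the detector not at all. Nothing prevents a $\lambda$--correct detector from placing, for the same point $\vec{X} \in \Phi_\lambda(I)$, a counterpart at distance $< \delta_D$ under $\vec{F}_1$ but only within $(\delta_D, \lambda]$ under $\vec{F}_2$. Your existential retention criterion then keeps $\vec{X}$, and $\delta_D$--correctness fails for $\vec{F}_2$; tightening the criterion to a universal one (retain only if a $\delta_D$--close counterpart exists for \emph{every} transform) discards $\vec{X}$ and deletes the pair it contributed to $\mset{K}_{\Phi_\lambda}(I, I\circ\vec{F}_1)$, breaking $\Psi$--equivalence. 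So the dichotomy you flagged is real, and the appeal to the idealized physical-scene setting of Section~\ref{related-work} cannot close it: that setting constrains the exact detector of the definition, not the approximate $\lambda$--correct detectors the theorem quantifies over.

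You should also know that the paper's own proof does not overcome this difficulty---it sidesteps it. The paper fixes ``a fixed continuous transform $\vec{F}$ and an image $I$'' at the outset, prunes relative to that single pair, and explicitly checks equivalence ``only for images $I$ and $I\circ\vec{F}$''; the universal quantifier over transforms in the definition of $\delta_D$--correctness is silently dropped, so what is actually established is a per-pair version of the theorem in which $\Phi_\Psi$ depends on $(I, \vec{F})$. If you want a statement your argument genuinely proves, either restate the theorem per transform (where your one-shot pruning, together with your closure observation, gives a shorter and cleaner proof than the paper's sequential construction, since it needs no induction, no termination argument, and no transitivity chain), or add an explicit hypothesis of the kind you reached for---e.g., that the detection error of $\Phi_\lambda$ at corresponding locations is uniform over the class of admissible transforms---under which your promotion from ``participates somewhere'' to ``participates everywhere'' becomes legitimate.
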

\begin{proof}
Resorting to definition of continuous descriptor, consider value of $\delta_D$. Two alternatives are available for detector  $\Phi_\lambda$
\begin{enumerate}[topsep=0pt,partopsep=0pt,itemsep=0pt,parsep=2pt]
  \item $\lambda \leq \delta_D$: in this case $\Phi_\lambda$ apparently is a
  $\delta_D$--correct detector and therefore $\Phi_\Psi$ is $\Phi_\lambda$.
  $\Phi_\Psi \varsim_\Psi \Phi_\lambda$ because of reflectivity of equivalence relation.
 \item $\lambda > \delta_D$: let us describe a way to build $\Phi_\Psi$. Consider a fixed continuous transform $\vec{F}$ and an image
 $I$. Then there exists at least one interest point $\vec{X}' \in
 \Phi_\lambda(I)$, such that for each $\vec{X}'' \in
 \Phi_\lambda(I\circ\vec{F})$
 $$
 \|\vec{F}(\vec{X}') - \vec{X}''\| > \delta_D.
 $$
 Let us denote as $\Phi_{\lambda_1}$ interest point detector, such that
 $\Phi_{\lambda_1}(I) = \Phi_{\lambda}(I)\setminus{\vec{X}'}$. This detector is evidently embedded in $\Phi_{\lambda}$. It can be shown that  $\Phi_{\lambda} \varsim_\Psi \Phi_{\lambda_1}$. By definition of
 $\Phi_{\lambda_1}$ it is necessary to prove equivalency conditions only for images $I$ and $I\circ\vec{F}$.
Let us introduce a quantity
 \begin{equation}\label{eq:detection-error}
 e_{I,\vec{F}}(\vec{X}') = \min\limits_{\vec{X}'' \in \Phi_\lambda(I
 \circ\vec{F})}\|\vec{F}(\vec{X'})- \vec{X}''\|,
 \end{equation}
called detection error of $\Phi_\lambda$ at point $\vec{X}'$\footnote{Concept of localization accuracy \citep{zitova,mikolajczyk05} is related to detection error.}. By definition, for point $\vec{X}'$ inequality $e_{I,\vec{F}}(\vec{X}') > \delta_D$ holds. Then
 $\vec{X}'$ is a redundant interest point and cannot belong to any of pairs comprising a set $\mset{K}_{\Phi_\lambda}(I,
 I\circ\vec{F})$. Hence equivalence condition \eqref{eq:equivalence} follows.
\end{enumerate}
Two described above alternatives are available for detector $\Phi_{\lambda_1}$ too. Sequentially carrying out similar analysis we construct a sequence of detectors $\{\Phi_{\lambda_0},
 \Phi_{\lambda_1}, \Phi_{\lambda_2}, \ldots\}$, where $\lambda_0 = \lambda$. It can be shown that there exists an element of sequence $\Phi_{\lambda_{i^*}}$, such that $\lambda_{i^*} \leq \delta_D$.
A set $\Phi_{\lambda}(I)$ is finite by definition, therefore above sequence also has a finite number of elements.
Consider a singular case, when for each $\vec{X}' \in  \Phi_\lambda(I)$ holds $e_{I,\vec{F}}(\vec{X}') > \delta_D$. Then in process of sequence construction all points will be excluded one by one: $\lambda_{i^*} = 0 <
 \delta_D$, corresponding to a trivial 0--correct detector.
Otherwise there exists be at least one interest point $\vec{X}^* \in
 \Phi_\lambda(I)$, such that $e_{I,\vec{F}}(\vec{X}^*) \leq \delta_D$. Then $\lambda_{i^*} = e_{I,\vec{F}}(\vec{X}^*) \leq \delta_D$. In general, there can exists a set of points $\{\vec{X}_1^*,\ldots,\vec{X}_n^*\}$, such that
 $e_{I,\vec{F}}(\vec{X}_i^*) \leq \delta_D$. In this case it follows that
 $$\lambda_{i^*} = \max\limits_{i=\overline{1,n}}e_{I,\vec{F}}(\vec{X}_i^*) \leq
 \delta_D.$$
Hereby a $\delta_D$--correct interest point detector $\Phi_\Psi = \Phi_{\lambda_{i^*}}$ exists and can be constructed with described procedure.
At the same time $\Phi_{\lambda_{i^*}}$ is a last element of detector sequence under analysis. That is because first alternative holds for $\Phi_{\lambda_{i^*}}$.
Statement $\Phi_\Psi \varsim_\Psi \Phi_\lambda$ appears as a result of transitivity of $\Psi$--equivalence and proof and
 $\Psi$--equivalence between any contiguous elements of detector sequence:
 $\Phi_{\lambda_i} \varsim_\Psi \Phi_{\lambda_{i + 1}},\,i=\overline{0,i^*-1}$.
\end{proof}

By means of $\lambda$--correct detector theory it is possible to compare existing detectors measuring their correctness level.
However it is more important to evaluate redundancy of detector in the scope of given applications. Such evaluation can be carried out within developed framework on the basis of system--theoretic approach. We have already shown an interdependency between $\lambda$--correct interest point detectors and descriptors. To evaluate redundancy it is necessary to dispose of  $\delta_D$ value (cf. definition of descriptor continuity). This value can be estimated given $\epsilon_D$ value, that defines interest point description distinction threshold. This value can be by--turn evaluated on the assumption of quality measures specific to an area of application. For example, transform approximation error can be a quality measure for image registration: given the value of admissible error $\epsilon_D$ value can be estimated experimentally. Systematic approach to interdependencies between stages of image processing allows therefore to evaluate redundancy of interest point detectors.

\section{Practical Applications}\label{practice}
Among possible practical applications of $\lambda$--correct detector theory reducing computational complexity of image matching is in paper's focus.

\begin{theorem}\label{st:complexity}
Consider a set of  $\Psi$--equivalent interest point detectors
$\{\Phi_{\lambda_k}\}$. Let among elements of this set exist $\Phi_{\lambda_k^*}$ such that $\lambda_k^* > \delta_D$ and
$\Phi_{\lambda_{*k}}$ such that $\lambda_{*k} \leq \delta_D$. Then an ordering relationship $\prec$ can be established upon the set $\{\Phi_{\lambda_k}\}$: $\Phi_{\lambda_1} \prec
\Phi_{\lambda_2} \Longleftrightarrow \lambda_1 \leq \delta_D,\, \delta_D <
\lambda_2$. And for all $\Phi_{\lambda_{*k}} \prec \Phi_{\lambda_k^*}$
hold the inequality
$$
c(\Phi_{\lambda_{*k}}) < c(\Phi_{\lambda_k^*}),
$$
where $c(\Phi_\lambda)$ is a number of $\rho_D$ value calculations required to establish correspondences between interest points detected by means of $\Phi_\lambda$.
\end{theorem}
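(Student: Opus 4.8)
The plan is to treat the two assertions separately: first that $\prec$ is a legitimate ordering, then the cost inequality. For the ordering I would verify the defining properties of a strict partial order directly from the definition $\Phi_{\lambda_1}\prec\Phi_{\lambda_2}\iff \lambda_1\leq\delta_D<\lambda_2$. Irreflexivity is immediate, since $\lambda\leq\delta_D$ and $\delta_D<\lambda$ cannot hold simultaneously; antisymmetry follows because $\Phi_{\lambda_1}\prec\Phi_{\lambda_2}$ forces $\lambda_2>\delta_D$, which precludes $\Phi_{\lambda_2}\prec\Phi_{\lambda_1}$; transitivity holds vacuously, as $\Phi_{\lambda_1}\prec\Phi_{\lambda_2}\prec\Phi_{\lambda_3}$ would require $\lambda_2>\delta_D$ and $\lambda_2\leq\delta_D$ at once. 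The hypothesis that the family contains both some $\Phi_{\lambda_k^*}$ with $\lambda_k^*>\delta_D$ and some $\Phi_{\lambda_{*k}}$ with $\lambda_{*k}\leq\delta_D$ is precisely what guarantees that $\prec$ relates at least one pair, so the ordering is nontrivial.

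For the cost inequality I would first fix the cost model implied by the exhaustive matching scheme of Sect.~\ref{related-work}: establishing correspondences for a detector $\Phi_\lambda$ on a pair $I'$, $I''$ requires one $\rho_D$ evaluation for every pair $(\vec{X}',\vec{X}'')$ with $\vec{X}'\in\Phi_\lambda(I')$ and $\vec{X}''\in\Phi_\lambda(I'')$, so $c(\Phi_\lambda)=|\Phi_\lambda(I')|\cdot|\Phi_\lambda(I'')|$. Since this is strictly monotone in each factor, the claim reduces to showing that the irredundant detector detects no more points than the redundant one on either image, and strictly fewer on at least one.

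The heart of the argument is the containment $\Phi_{\lambda_{*k}}(I)\subseteq\Phi_{\lambda_k^*}(I)$, which I would extract from $\Psi$--equivalence together with the correctness levels. Take any $\vec{X}'\in\Phi_{\lambda_{*k}}(I')$; because $\lambda_{*k}\leq\delta_D$ the detector $\Phi_{\lambda_{*k}}$ supplies a point $\vec{X}''\in\Phi_{\lambda_{*k}}(I'')$ with $\|\vec{F}(\vec{X}')-\vec{X}''\|\leq\delta_D$, so $(\vec{X}',\vec{X}'')\in\mset{K}_{\Phi_{\lambda_{*k}}}(I',I'')$; by \eqref{eq:equivalence} this pair also lies in $\mset{K}_{\Phi_{\lambda_k^*}}(I',I'')$, which forces $\vec{X}'\in\Phi_{\lambda_k^*}(I')$, and the symmetric reasoning applies on $I''$. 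Conversely, $\lambda_k^*>\delta_D$ means $\Phi_{\lambda_k^*}$ carries a point whose detection error \eqref{eq:detection-error} exceeds $\delta_D$; by the redundancy argument of Theorem~\ref{st:existence} such a point belongs to no pair of $\mset{K}_{\Phi_{\lambda_k^*}}$ and therefore cannot occur in $\Phi_{\lambda_{*k}}$, so the containment is strict on that image. Feeding $|\Phi_{\lambda_{*k}}(I')|<|\Phi_{\lambda_k^*}(I')|$ and $|\Phi_{\lambda_{*k}}(I'')|\leq|\Phi_{\lambda_k^*}(I'')|$ into the cost model yields $c(\Phi_{\lambda_{*k}})<c(\Phi_{\lambda_k^*})$.

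The step I expect to be delicate is this containment: the correctness condition is stated with $\|\cdot\|\leq\lambda$ whereas membership in $\mset{K}$ uses the strict inequality $<\delta_D$, so one must either restrict to $\lambda_{*k}<\delta_D$ or argue separately that boundary points with detection error exactly $\delta_D$ do not alter the correspondence set. I would also record the mild nondegeneracy assumption that both images yield nonempty detections, without which the product cost could vanish and the inequality would degrade to a non--strict one.
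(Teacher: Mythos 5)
Your proof is correct in substance but takes a genuinely different route from the paper's. The paper's proof is purely arithmetic: it asserts that $\lambda_k^* > \delta_D$ forces the existence of $n^* \geq 1$ redundant points among the $n$ points detected by $\Phi_{\lambda_k^*}$, takes for granted that $\Phi_{\lambda_{*k}}$ detects exactly the remaining $n - n^*$ points, and compares the triangular costs $c(\Phi_{\lambda_k^*}) = n(n+1)/2$ and $c(\Phi_{\lambda_{*k}}) = (n-n^*)(n-n^*+1)/2$, concluding via the positive difference $n^*(1+2n-n^*)/2$. You instead (i) verify that $\prec$ is a strict partial order, which the paper never does; (ii) adopt the bipartite cost model $|\Phi_\lambda(I')|\cdot|\Phi_\lambda(I'')|$ rather than the paper's single-count $n(n+1)/2$ --- both are strictly monotone, so the conclusion is insensitive to this choice; and (iii), most substantively, you \emph{prove} the containment $\Phi_{\lambda_{*k}}(I) \subseteq \Phi_{\lambda_k^*}(I)$ from $\Psi$--equivalence together with $\lambda_{*k}$--correctness, whereas the paper silently presumes that the two detectors' point sets differ exactly by the redundant points (true for the detectors constructed in Theorem~\ref{st:existence}, but never established for arbitrary $\Psi$--equivalent pairs as the theorem statement requires). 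Your containment lemma therefore fills a tacit gap in the paper and makes the inequality valid across the whole equivalence class, at the price of the boundary delicacy you rightly flag: correctness gives $\|\vec{F}(\vec{X}')-\vec{X}''\| \leq \lambda_{*k} \leq \delta_D$ while membership in $\mset{K}$ demands the strict bound $< \delta_D$, so the case $\lambda_{*k} = \delta_D$ needs the separate treatment you propose. Be aware that you inherit the paper's one genuinely unproved step: the claim that $\lambda_k^* > \delta_D$ forces an actual detected point with detection error exceeding $\delta_D$ holds only if $\lambda_k^*$ is a tight (minimal) correctness level, an assumption neither you nor the paper makes explicit; your added nondegeneracy caveat about nonempty detection sets is a sensible parallel precaution that the paper, working with counts of a single set, sidesteps.
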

\begin{proof}
Inequality $\lambda_k^* > \delta_D$ means that among points detected with $\Phi_{\lambda_k^*}$ there will inevitably be $\lambda_k^*$--correct interest points that are not $\delta_D$--correct. Let $n^*$ be a number of such points and $n$ will be a total number of detected points.
By definition of $\Psi$ each of above described $n^*$ is redundant. Therefore $\rho_D$ value calculation for such points is redundant too. Establishing correspondences requires $n (n + 1) / 2$ calculations of metric value:
$$
c(\Phi_{\lambda_k^*}) = \frac{n (n + 1)}{2}.
$$
In case when correspondences are to be established only between $\delta_D$--correct points it follows that
$$
c(\Phi_{\lambda_{*k}}) = \frac{(n - n^*) (n - n^* + 1)}{2}.
$$
\noindent Existence of $n^*$ redundant points results in redundant metric value calculations that have no effect on matching:
$$
c(\Phi_{\lambda_k^*}) - c(\Phi_{\lambda_{*k}}) = \frac{n^*(1 + 2n - n^*)}{2} > 0.
$$
\end{proof}

Theorem~\ref{st:complexity} states that the concept of redundant complexity is defined upon
$\Psi$--equivalent interest point detectors. Theorem~\ref{st:existence}
states that for each $\lambda$--correct interest point detector $\Phi_\lambda$ it is possible to construct $\Psi$--equivalent
irredundant detector $\Phi_\Psi$.
Property of $\Psi$--equivalence guarantees that sets of corresponding interest points computed using descriptor $\Psi$
together with detectors $\Phi_\lambda$ and $\Phi_\Psi$ will coincide. Therefore transform approximations will coincide too. Furthermore, if sets of corresponding points are employed in the application scope to solve problems other than transform estimation, the solutions obtained with $\Phi_\lambda$ and $\Phi_\Psi$ will coincide. The choice between $\Phi_\lambda$
and $\Phi_\Psi$ can thus be based on their computational complexity: by
Theorem~\ref{st:complexity}, $\Phi_\Psi$ is advantageous.

Consider now a question of computing $\Phi_\Psi$ given a $\Phi_\lambda$. Theorem~\ref{st:existence} proof is constructive, but it can't be directly employed in numerical methods, because in the course of proof detection error function $e_{I, \vec{F}}(\vec{X})$ plays a significant role. To compute values of this function we have to know transform
$\vec{F}$ (cf. equation \eqref{eq:detection-error}): this requirement prohibits usage of detection error functions is numerical methods since transform is unknown.
Building detector $\Phi_\Psi$ requires means of indirect estimation of  $\Phi_\lambda$ detection error. It should be noticed that in the course of Theorem~\ref{st:existence} proof detection error function is used only to test the inequality $e_{I, \vec{F}}(\vec{X}) > \delta_D$.
Thus given some function $\hat{e}_{I}(\vec{X})$ such that
$e_{I, \vec{F}}(\vec{X}) > \delta_D \Longleftrightarrow
\hat{e}_{I}(\vec{X}) > \hat{\delta}$, where $\hat{\delta}$ is a constant,
$\hat{e}_{I}(\vec{X})$ can replace $e_{I, \vec{F}}(\vec{X})$
without loss of proof validity.

Indirect estimation of $\Phi_\lambda$ detection error can be obtained with different approaches. The straightforward way lies
in averaging values $e_{I, \vec{F}}(\vec{X})$ precalculated for a sampled images, such that transform $\vec{F}$ is known in advance. Similar procedures were proposed in \citep{Mikolajczyk05Affine} for comparative analysis of several known detectors. However using this approach during image registration is ineffective since it requires a multitude of image transformations.

An alternative lies in machine learning employment. Possibility of such approach is reasoned by the fact that testing inequality $\hat{e}_{I}(\vec{X}) > \hat{\delta}$ can be seen as a binary classification problem \citep{Mitchell97}. Positive class corresponds to $\delta_D$--correct interest points. Within such framework interest point description are to be classified, and the descriptions can be calculated by means of descriptor $\hat{\Psi}$ that can differ from $\Psi$ (descriptor used for matching). An example of such approach is a methodic described in \citep{ThomsonComer09}.

Function $\hat{e}_{I}(\vec{X})$ and $\hat{\delta}$ constant can also be defined explicitly. Article \citep{foo} proposes to employ
Laplacian values to evaluate interest point quality: $\hat{e}_{I}(\vec{X}) = {\Delta}I(\vec{X})$. Value of $\hat{\delta}$
is estimated empirically. However there were is no knowledge about correlation between error function and Laplacian.

Finally, visual attention models can be utilized to estimate detection error. Several studies were carried out to evaluate repeatability of salient interest points \citep{PimenovGraphicon,ViAtDraper05,frintrop2}. Draper and Lionelle \citep{ViAtDraper05} compared NVT model (Neuromorphic Vision Toolkit) \citep{ViAtItti98} with SAFE (Selective Attention as a Front End) model and concluded that using SAFE allows to select interest points with average repeatability over 90\% under similarity transforms.
Model VOCUS (Visual Attention System for Object Detection and Goal-directed Search) \citep{frintrop} also build upon NVT was evaluated in articles \citep{frintrop2,PimenovGraphicon}. Results show that ratio of redundant interest points among salient ones is also lower than 10\%. One disadvantage of using visual attention model is inherent restriction to process only natural images.

To conclude, there are several successful approaches to implement image matching complexity reduction and these approaches can be generalized in terms of building irredundant interest point detector $\Phi_\Psi$. It should be noticed that with help of proposed theory it is possible to carry out evaluative studies to compare described approaches.

\section{Conclusion}\label{conclusion}
In this paper a novel interest point detection theory is proposed. By means of system--theoretic analysis of interdependency between interest point detector and descriptor developed an approach to introduce equivalency relation between detectors that are used together with a fixed descriptor. Formal definition of interest point redundancy allows to prove existence of irredundant detector that can be constructed on the basis of any given detector. It is shown how a theory developed can be employed to reduce computational complexity of interest point matching.

Current approach is centered around the notion of $\lambda$--correct interest point detector that generalizes known concept of detector. Since concept of descriptor is left unchanged, further research will be directed to generalizing the notion of interest point descriptor and applying system--theoretic approach to the problem of image interest point based image analysis at whole.







\end{document}